\newtheorem{definition}{Definition}
\newtheorem{lemma}{Lemma}
\newtheorem{proposition}{Proposition}
\newcommand{\Fig}[1]		{Fig.\,\ref{#1}}
\newcommand{\Eq}[1]			{Eq.\,\ref{#1}}
\newcommand{\Tab}[1]		{Tab.\,\ref{#1}}
\newcommand{\Alg}[1]		{Alg.\,\ref{#1}}
\newcommand{\Lemma}[1]		{Lemma\,\ref{#1}}
\newcommand{\Proposition}[1]		{Proposition\,\ref{#1}}
\newcommand{\ie}   			{i.e.~}
\newcommand{\eg}   			{e.g.~}
\newcommand{\wrt}   		{w.r.t.~}
\newcommand{\iid}   		{i.i.d.~}
\newcommand{\st}   			{\mbox{s.t.~}}
\newcommand{\Erdos}   	{Erd\"os-R\'enyi }
\newcommand{\one}       {\mathds{1}}
\newcommand{\op}[1]     {\,{#1}\,}
\newcommand{\matdef}[3] {#1^{#2 \times #3}}  
\newcommand{\inlinetitle}[2]  {\noindent\textbf{\emph{#1}{#2}}}
\renewcommand{\footnotesize}   {\small}
\DeclareMathOperator*{\argmax}{arg\,max}
\newcommand{\mydots}   			{{...}}
\newcommand{\titleName}   	 {Multivariate Hawkes Processes for Large-scale Inference}
\newcommand{\methodName}     {Low-Rank Hawkes Processes\xspace}
\newcommand{\methodNameBis}    {Low-Rank Hawkes Process\xspace}
\newcommand{\methodInitials} {LRHP\xspace}
\newcommand{\mhp} 					 {MHP\xspace}
\title{\titleName}
\author{
R\'emi Lemonnier$^{1,2}$
\hspace{1.5em}
Kevin Scaman$^1$
\hspace{1.5em}
Argyris Kalogeratos$^1$
\\$^1$ CMLA {--} ENS Cachan, CNRS, Universit\'e Paris-Saclay, France
\\$^2$ Numberly, 1000Mercis group, Paris, France
\\\texttt{\{lemonnier, scaman, kalogeratos\}@cmla.ens-cachan.fr}
}
\date{}
\begin{document}

\maketitle
%
\begin{abstract}
In this paper, we present a framework for fitting multivariate Hawkes processes for large-scale problems both in the number of events in the observed history $n$ and the number of event types $d$ (\ie dimensions). The proposed \emph{\methodNameBis} (\methodInitials) framework introduces a low-rank approximation of the kernel matrix that allows to perform the nonparametric learning of the $d^2$ triggering kernels using at most $O(ndr^2)$ operations, where $r$ is the rank of the approximation ($r \op{\ll} d,n$). This comes as a major improvement to the existing state-of-the-art inference algorithms that are in $O(nd^2)$.
Furthermore, the low-rank approximation allows \methodInitials to learn representative patterns of interaction between event types, which may be valuable for the analysis of such complex processes in real world datasets.
The efficiency and scalability of our approach is illustrated with numerical experiments on simulated as well as real datasets.
%
\end{abstract}
%
%
%
\section{Introduction}\label{sec:intro}
%

In many real-world phenomena, such as product adoption or information sharing, events exhibit a \emph{mutually-exciting} behavior, in the sense that the occurrence of an event will increase the occurrence rate of others.
In the field of internet marketing, the purchasing behavior of a client of an online shopping website can be, to a large extent, predicted by his past navigation history on other websites.
In finance, arrivals of buying and selling orders for different stocks convey information about macroscopic market tendencies.
In the study of information propagation, users of a social network share information from one to another, leading to \emph{information cascades} spreading throughout the social graph.
Over the past few years, the study of point processes gained attention as the acquisition of such datasets by companies and research laboratories became increasing simple. 
However, the traditional models for time series analysis, such as discrete-time auto-regressive models, do not apply in this context due to fact that events happen in a continuous way.

Multivariate Hawkes processes (MHP) \cite{oakes1975markovian,liniger2009multivariate} have emerged in several fields as the gold standard to deal with such data, \eg earthquake prediction \cite{vere1978earthquake}, biology \cite{reynaud2013spike}, financial \cite{bauwens2009modelling,alfonsi2014dynamic} and social interactions studies \cite{crane2008robust}. For MHP, an event of type $u$ (\eg the visit of a product's website) occurring at time $t$, will increase the conditional rate of occurrence of events of type $v$ at time $s \op {\geq} t$ (\eg purchases of  this product in the future) by a rate $g_{uv}(s \op{-} t)$.

While these processes have been extensively studied from the probabilistic point of view (stability \cite{bremaud1996stability}, cluster representation \cite{bordenave2007large}), their application to real-scale datasets remains quite challenging. For instance, social interactions data is at the same time \emph{big} (large number of posts), \emph{high-dimensional} (large number of users), and \emph{structured} (social network). 

Several nonparametric estimation procedures have been proposed for \mhp \cite{hansen2012lasso,bacry2013modelling,conf/icml/ZhouZS13}. However, due to the dependence of the stochastic rate of occurrence at a given time on all past occurrences, these estimation procedures are quadratic in the number of events, which renders them impractical for large datasets. In the direction of tackling this issue, \cite{lemonnier2014nonparametric} proposed a nonparametric linear-time estimation procedure relying on the \emph{memoryless property} of Hawkes processes with exponential triggering kernels. 
However, the complexity of their algorithm remains quadratic in the number of dimensions, since each of the $d^2$ triggering kernels $g_{uv}$ needs to be estimated.

In this paper we introduce \emph{\methodName} (\methodInitials), a model for structured point processes relying on a \emph{low-rank decomposition of the triggering kernel} that aim to learn representative patterns of interaction between event types. We also provide an efficient and scalable inference algorithm for \methodInitials with linear complexity in the total number of events and number of event types (\ie dimensions). This inference is performed by combining minorize-maximization and self-concordant optimization techniques. In addition, if the underlying network of interactions is provided, the presented algorithm fully exploits the network's sparsity, which makes it practical for large and structured datasets. The major advantage of the the proposed \methodInitials algorithm is that it is able to scale-up to graphs much larger than previous state-of-the-art methods, while maintaining performances very close to state-of-the-art competitors in terms of prediction and inference accuracy on synthetic as well as real datasets.

The rest of the paper is as follows. In Section 2, we recall the definition of \mhp and introduce the associated inference problem. In Section 3, we project the original dimensions in a low-rank space and decompose the triggering functions over a basis of exponential kernels. In Section 4, we develop our new inference algorithm  \methodInitials and show that its theoretical complexity is lower than state-of-the-art. In Section 5, we empirically prove that \methodInitials outperforms significantly the state-of-the-art in terms of computational efficiency, while maintaining a very high level of precision for the task of recovering triggering kernels as well as predicting future events.

\begin{table*}[t] \label{tab:notations} 
\hspace{-1.2em}
\scriptsize 
\begin{tabular}{ l | l }
  \toprule
	\textbf{Symbol} & \textbf{Description}\\
	\midrule
	$d$ & number of event types, \ie dimensions of the multivariate Hawkes process\\
	$r$ & rank of the low-dimensional approximation\\
	$n$ & number of events of all realizations of the \methodInitials process\\
	$K$ & number of triggering kernels\\
	\midrule
	$G \op{=} \{\mathcal{V},\mathcal{E}\}$ & a network of $d$ nodes, node set $\mathcal{V}$ and edge set $\mathcal{E}$\\
	$A$ 							& network's adjacency matrix \\
    $\Delta$ 				& maximum node degree of $G$ \\
	\midrule
  $u,v \op{=} 1,\mydots,d$ & indices on dimensions of the original space \\
	$i,j\op{=}1,\mydots,r$ & indices on dimensions of the low-dimensional embedding\\
	$P$ 						  & $d\times r$ event type-to-group projection matrix \\
	\midrule
	$N(t) \op{=} [N_u(t)]_u$ & $d$-dimensional counting process ($t \op{\geq} 0, u \op{=} 1,\mydots,d$)\\
	$\lambda_u(t)$    & non-negative occurrence rate for event type $u$ at time $t$ \\
	$\mu_u(t)$        & natural occurrence rate for event type $u$ at time $t$ \\
	$g_{vu}(\Delta t)$& kernel function evaluating the affection of $\lambda_u$ due to events of type $v$ at time distance $\Delta t$ \\
	\midrule
	$\alpha, \beta$   & parameters of the triggering kernels \\
	$\gamma, \delta$  & hyperparameters of the triggering kernels \\
	\midrule
	$h \op{=} 1,\mydots,H$   & realizations of the \methodInitials process ($d$-dimensional) \\
	$m \op{=} 1,\mydots,n_h$ & events of the realization $h$, which may belong to any event type \\
	$\mathcal{H}^h$             & history of $(t_m^h, u_m^h)_{m=1}^{n_h}$ events of the realization $h$, indicating (time of event, event type) \\
	$\mathcal{H}$               & collection of the event histories of all $H$ realizations \\
	$\sigma$          & maximum number of event types involved in a realization \\
	
  \midrule
	$B, D$          & tensors with four and five dimensions, respectively, introduced to simplify our inference algorithm \\
	\bottomrule
\end{tabular}
\caption{Index of main notations.}
\end{table*}
%
%
%
\section{Setup and Notations}\label{sec:notations}
%
A multivariate Hawkes process (\mhp) $N(t) \op{=} \{N_u(t)\!: u \op{=} 1, \mydots, d, t \op{\geq} 0\}$ is a $d$-dimensional counting process, where $N_u(t)$ represents the number of events along dimension $u$ that occurred during time $[0,t]$. We will call \emph{event of type $u$} an event that occurs along dimension $u$. Each one-dimensional counting process $N_u(t)$ can be influenced by the occurrence of events of other types. Without loss of generality, we will consider that these \emph{mutual excitations} take place along the edges of an unweighted directed network $\mathcal{G} \op{=} (\mathcal{V}, \mathcal{E})$ of $d$ nodes and adjacency matrix $A \op{\in} \matdef{\{0,1\}}{d}{d}$. Finally, we denote as $\mathcal{H}:(u_m, t_m)_{m=1}^n$ the event history of the process indicating, for each single event $m$, its type $u_m$ and time of occurrence $t_m$.
Then, the non-negative stochastic rate of occurrence of each $N_u(t)$ is defined by:
\begin{equation}\label{eq:hawkes}
\lambda_u(t) = \mu_u(t) + \sum_{m : t_m < t} A_{u_m u}\,g_{u_m u}(t - t_m).
\end{equation}
In the above, $\mu_u(t) \op{\geq} 0$ is the \emph{natural occurrence rate} of events of type $u$ (\ie along that dimension) at time $t$, and the \emph{triggering kernel function} evaluation $g_{vu}(s \op{-} t) \geq 0$ determines the \emph{increase} in the occurrence rate of events of type $u$ at time $s$, caused by an event of type $v$ at a past time $t \op{\leq} s$.

The natural occurrence rates $\mu_u$ and triggering kernels $g_{vu}$ are usually inferred by means of log-likelihood maximization. The main practical issue for inferring the parameters of the model in \Eq{eq:hawkes} is that it requires a particularly large dataset of observations, and standard inference algorithms require at least one observation per pair of event types (\ie $d^2$ observations). In many practical situations, the underlying network of interactions is unknown, and in such a case, we will consider that each event type can be affected by any other, hence $A_{uv} \op{=} 1$ for every pair of event types $u \op{\neq} v$. An index of the main notations used in this paper is provided in \Tab{tab:notations}.
%
%
%
\section{\methodName}\label{sec:mmh} 
%
%
\subsection{The proposed model}\label{sec:model}
%
%
%

\inlinetitle{Model considerations}{.} 
%
%
Standard \mhp inference requires the learning  of $d^2$ triggering kernels that encode the cross- and self-excitement of the event types. This requirement becomes prohibitive when $d$ is very large (\eg when the dimensions represent the users of a social network or websites on the Internet). However, in a number of practical situations, the $d^2$ complex interactions between event types can be summarized by considering that there is a small number of $r$ groups to which each event type belongs to a certain extent. Therefore, one needs to simultaneously learn a $d \op{\times} r$ event type-to-group(s) mapping (we specifically use \emph{soft} assignments) as well as the $r^2$ interactions between pairs of those groups.


\medskip
\inlinetitle{Model formulation}{.} 
\emph{\methodName} (\methodInitials) simplify the standard inference process by projecting the original $d$ event types (\ie dimensions) of a multivariate Hawkes process into a smaller and more compact $r$-dimensional space. The natural occurrence rates $\mu_u$ and triggering kernels $g_{vu}$ of \Eq{eq:hawkes} are then defined via the low-rank approximation:
\begin{equation}\label{eq:mu-and-g}
\begin{array}{l}
\ \mu_u(t) = \sum_{i=1}^r P_{ui} \, \tilde{\mu}_i(t),\\\\
g_{vu}(t) = \sum_{i,j=1}^r P_{ui} \, P_{vj} \, \tilde{g}_{ji}(t),
\end{array}
\end{equation}
where $u, v$ are event types, $P \op{\in} \matdef{\mathbb{R}_{+}}{d}{r}$ is the projection matrix from the original $d$-dimensional space to the low-dimensional space, and $i, j$ are its component directions. Besides, this projection can be seen as a low-rank approximation of the kernel matrix $g$ since, in matrix notations, $g \op{=} P\tilde{g}P^\top$ and $\tilde{g} \op{\in} \matdef{\mathbb{R}_{+}}{r}{r}$ is a matrix of size $r \op{\ll} d$.

Then, the \methodInitials occurrence rates are formulated as an extension of \Eq{eq:hawkes} that uses an embedding of event types in a low-dimensional space:
\begin{equation}\label{eq:intensityValue}
\begin{split}
\!\lambda_u(t) = \sum_{i=1}^r P_{ui} \, \tilde{\mu}_i(t) \\
&\hspace{-20mm} + \sum_{m : t_m < t}\ \sum_{i,j=1}^r P_{ui} \, P_{u_m j} \, A_{u_m u} \, \tilde{g}_{ji}(t - t_m).
\end{split}
\end{equation}
Specifically, if the projection of event type $u$ along the dimension $i$ is given by $P_{ui}$, then the event type $u$ essentially inherits the natural occurrence rate of events of that component $\tilde{\mu}_i$, with multiplicative weight $P_{ui}$, that is $\sum_{i=1}^r P_{ui} \tilde{\mu}_i$. In addition, if the projection of event type $v$ along each dimension $j$ is given by $P_{vj}$, then $v$'s effect on event type $u$ will be evaluated by $\sum_{i,j=1}^r P_{ui} P_{vj} \tilde{g}_{ji}$. 

Keeping in mind that $r \op{\ll} d$, the proposed low-rank approximation is a simple and straightforward way to: i)~impose regularity to the inferred occurrence rates by introducing constraints to the parameters, and ii)~reduce the number of parameters. Specifically, 
the $d$ natural rates and $d^2$ triggering kernels are reduced to $r$ and $r^2$, respectively, with the only extra need of inferring the $d \op{\times} r$ elements of the matrix $P$.

\medskip
\inlinetitle{Remark on the uniqueness of the projection}{.} 
Unless any further assumption is made on the projection matrix $P$ or the low-dimensional kernel $\tilde{g}$, the low-rank decomposition of the triggering kernel $g \op{=} P\tilde{g} P^\top$ is not unique. More specifically, any change of basis in the $r$-dimensional space will not alter the decomposition. Notwithstanding, \emph{uniqueness} is not required in order to perform the prediction task, and therefore we do not address this issue in the present paper.
\subsection{Log-likelihood}
\label{sec:loglik}
%
\inlinetitle{General formulation}{.} 
For $h \op{=} 1,\mydots,H$, let $\mathcal{H}^h = (t_m^h, u_m^h)_{m\leq n_h}$ be the observed \iid realizations sampled from the Hawkes process, and $\mathcal{H} = (\mathcal{H}^h)_{h\leq H}$ the recorded history of events of all realizations. For each realization $h$, we denote as $[T_-^h, T_+^h]$ the observation period, and $u_m^h$ and $t_m^h$ are respectively the event type and time of occurrence of the $m$-th event. 
Then, the log-likelihood of the observations can be written as:
\begin{equation} \label{eq:general_loglik}
\begin{split}
\mathcal{L}(P,&\mathcal{H};\mu,g)=
%
\sum_{h=1}^H \left[
\sum_{m=1}^{n_h} \ln \left(
\sum_{i=1}^r P_{u_m^h i} \, \tilde{\mu}_i(t_m^h) \right. \right. \\
&\hspace{-6mm} \left. + \sum_{i,j} \sum_{l :\,t_l^h < t_m^h}\! P_{u_m^h i} \, P_{u_l^h j} \, A_{u_l^h u_m^h} \, \tilde{g}_{ji}(t_m^h - t_l^h) \right) \\
&\hspace{-6mm} - \sum_{u,i} P_{ui} \int_{T_-^h}^{T_+^h} \tilde{\mu}_i(s) ds \\
&\hspace{-6mm} - \left. \! \sum_{u,v,i,j} P_{ui} \, P_{vj} \, A_{vu} \int_{T_-^h}^{T_+^h} \tilde{g}_{ji}(s - t_m^h) ds \right]\!\!.
%
\end{split}
\end{equation}

Our objective is to infer the natural rates $\tilde{\mu}_i$ and triggering kernels $\tilde{g}_{ji}$ by means of log-likelihood maximization. 
From \Eq{eq:general_loglik}, we see that, for arbitrary $\tilde{g}_{ji}$, a single log-likelihood computation already necessitates $O(\sum_{h=1}^{H} n_h^2)$ triggering kernel evaluations. This is intractable when individual realizations can have a number of events of the order $10^7$ or $10^8$ (\eg a viral video when modeling information cascades). This issue can be tackled by relying on a convenient $K$-approximation introduced in \cite{lemonnier2014nonparametric}. Each natural occurrence rate and kernel function are approximated by a sum of $K$ exponential triggering functions:
\begin{equation}
\begin{array}{l}
\widehat{\mu}_i^K(t) = \sum_{k=0}^K \beta_{i, k}\, e^{-k \gamma t},\\\\
\widehat{g}_{ji}^K(t) = \sum_{k=1}^K \alpha_{ji, k}\, e^{-k \delta t},
\end{array}
\end{equation}
where $\gamma,\delta > 0$ are fixed hyperparameter values.

Due to the \emph{memoryless property} of exponential functions, this approximation allows for log-likelihood computations with complexity linear in the number of events, \ie $O(n = \sum_{h=1}^{H} n_h)$. Results of polynomial approximation theory  also ensures fast convergence of the optimal $\widehat{\mu}_i^K$ and $\widehat{g}_{ji}^K$ towards the \emph{true} $\tilde{\mu}_i$ and $\tilde{g}_{ji}$ with respect to $K$. For instance, if $\tilde{g}_{ji}$ is analytic, then $\sup_{t \in [0,T]} | \widehat{g}_{ji}^K(t) \op{-} \tilde{g}_{ji}(t)| \op{=} O(e^{-K})$ which means that, for smooth enough functions, choosing $K \op{=} 10$ already provides a good approximation.

We therefore search the values of parameters $\alpha,\beta$ 
that will maximize the approximated log-likelihood, as well as the most probable projection matrix $P$, conditionally to the realizations of the process, and under the constraint that the approximated natural rates and triggering kernels remain non-negative. At high-level, this is formally expressed as:
\begin{equation}
\begin{array}{c}
\displaystyle\argmax_{(P, \alpha, \beta
)}\ \widehat{\mathcal{L}}(P, \mathcal{H};\alpha,\beta)\\
\st \forall i,j,t,K : \ \ \widehat{\mu}_i^K(t) \geq 0 \mbox{ \ and \ } \widehat{g}_{ji}^K(t) \geq 0.
\end{array}
\end{equation}
Above, for clarity of notation, we actually reformulate the log-likelihood by introducing $\widehat{\mathcal{L}}$ that makes implicit the dependency of $\mathcal{L}$ in the fixed hyperparameters $K$, $\delta$, and $\gamma$. Note also that limiting $K$ and $r$ to small values can be seen as a form of regularization, although more refined approaches could be considered in case of training with datasets of very limited size.

\label{sec:matloglik}

\medskip
\inlinetitle{Simplification with tensor notation}{.} 
In order to perform inference efficiently, we now reformulate the log-likelihood using very large and sparse tensors. We also introduce the artificial $(r \op{+} 1)$-th dimension to the embedding space in order to remove linear terms of the equation and store the $\beta$ parameters as additional dimensions of $\alpha$. In detail, let $\alpha_{(r+1)i,k} \op{=} \beta_{i,k}$, $\alpha_{j(r+1),k} \op{=} 0$, and $P_{(d+1)i} \op{=} \one_{\{i=r+1\}}$ (note that $\one_{\{\cdot\}}$ denotes the indicator function), also, $\forall u \op{\in} \{1,\mydots,d\}$, $P_{u(r+1)} \op{=} 0$. Now, the log-likelihood of the model can be rewritten in the following way:
\begin{equation}
\begin{split}
\widehat{\mathcal{L}}(P, &\mathcal{H};\alpha) = \sum_{h,m} \ln \left(
\sum_{u,v,i,j,k} P_{ui} \, P_{vj} \, \alpha_{ji,k} \, D_{h,m,u,v,k}
\right)\\
&\hspace{12mm} -\sum_{h,u,v,i,j,k} P_{ui} \, P_{vj} \, \alpha_{ji,k} \, B_{h,u,v,k},
\end{split}
\end{equation}
%
where
%
\begin{equation} \label{eq:Bmatrix}
B_{h,u,v,k} = \left\{
\begin{array}{ll}
\sum_{m=1}^{n_h} J_{v,u,m} f_{k \delta}(T_{+}^h-t_m^h) &\mbox{if } v \op{\leq} d;\\
f_{k \gamma}(T_{+}^h - T_{-}^h) &\mbox{if } v \op{=} d \op{+} 1;\\
0 &\mbox{otherwise},
\end{array}\right.
\end{equation}
%
%
\begin{equation}\label{eq:Dmatrix}
D_{h,m,u,v,k} = \left\{
\begin{array}{llll}
\sum_{l=1}^{n_h} I_{h,m,l,u,v} \, e^{-k\delta (t_m^h - t_l^h)} &\mbox{if } v \op{\leq} d;\\
\one_{\{u_m^h = u\}} e^{-k\gamma (t_m^h - T_-^h)} &\mbox{if } v \op{=} d \op{+} 1;\\
0 &\mbox{otherwise},
\end{array}\right.
\end{equation}
with
\vspace{-1em}
\begin{itemize}
\item[] \hspace{0.4em} $\displaystyle f_{k x}(t) = \frac{1 - e^{-k x T}}{k x}$,\ \ for $x$ in $\{\gamma,\delta\}$,
\vspace{-0.0em}
\item[] \hspace{0.3em} $J_{v,u,m} = \one_{\{v=u_m^h\}} A_{vu}$,
\vspace{-0.0em}
\item[] \hspace{-0.8em} $I_{h,m,l,u,v} = \one_{\{u=u_m^h \ \wedge\  v=u_l^h \ \wedge\ t_l^h < t_m^h\}} A_{vu}$.\\
\vspace{-0.5em}
\end{itemize}
%
What is suggested by the expressions is the possibility to optimize the approximated log-likelihood, according to the different parameters and projection matrices, by first creating two large and sparse tensors $B$ and $D$ with four and five dimensions, respectively. 


\begin{algorithm}[tb!]
   \caption{Inference: high-level description}
   \label{alg:main}
\begin{algorithmic}
	 \STATE \textbf{Input:} history of events $\mathcal{H}$; hyperparameters $K$, $\gamma$, $\delta$; initialized projection matrix $P$ and 
	triggering kernel parameters $\alpha$
	 \STATE Compute $D$ and $B$ \hfill \small{//\,\emph{see \Alg{alg:construct_B_D}}}
	 \FOR{$i = 1$ \TO \emph{num$\_$iters}}
	 \STATE $\alpha = \argmax_{\alpha}\ \widehat{\mathcal{L}}(P,\mathcal{H};\alpha)$
     \STATE $\quad\ \ $ s.t. $\widehat{\mu}_i^K \geq 0$ and $\widehat{g}_{ji}^K \geq 0$,\  \,$i,j \op{=} 1,\mydots,r$
	 \STATE $P = \argmax_P\ \widehat{\mathcal{L}}(P,\mathcal{H};\alpha)$
	 \ENDFOR
	 \RETURN $P,\alpha$
\end{algorithmic}
\end{algorithm}

%
%
%
\section{The inference algorithm}\label{sec:inference}
%
%
The inference is performed by alternating optimization between the projection matrix $P$ and the Hawkes parameters $\alpha$. When all others parameters are fixed, the optimization \wrt $\alpha$ is performed using self-concordant function optimization with self-concordant barriers. The technical difficulty of this part is due to the need to ensure that non-negativity constraints are respected. For the optimization \wrt $P$, we introduce new optimization techniques based on a minorize-maximization algorithm. \Alg{alg:main} outlines the general scheme of our optimization algorithm. Recall that our basic notation is indexed in \Tab{tab:notations}.


\medskip
\inlinetitle{Computing $B$ and $D$ tensors}{.}
In order for the inference algorithm to be tractable, special attention has to be paid to the computation of $B$ and $D$ tensors. 
\Alg {alg:construct_B_D} describes the computation of the sparse tensors $B=(B_{h,u,v,k})$ and $D=(D_{h,m,u,v,k})$.
The most expensive operation in this algorithm is the multiplicative update of all $C_{v}^{k}$ with the exponential decay $\exp(-(k \op{+} \one_{\left\{v>0\right\}}) \gamma dt)$. Fortunately, this update only has to be performed for every node $v$ that already appeared in the cascade, which are at most $\sigma \leq d$ (by definition). The complexity of this operation is therefore $O(nK\sigma)$. The number of non-zero elements of $D$ and $B$ is $O(nK \min(\Delta,\sigma))$, where $\Delta$ is the maximum number of neighbors of a node in the underlying network $\mathcal{G}$.
If $\mathcal{G}$ is sparse, which is usually the case for social networks for instance, then $\Delta \op{\ll} d$ and therefore $O(nK\Delta) \op{\ll} O(nKd)$. Thus, 
storing and computing $B$ and $D$ is tractable for large dense graphs and for particularly large sparse graphs.
Note that, because computing the log-likelihood requires the computation of occurrence rates at each event time, which depends on the occurrences of all preceding events, the linear complexity in the number of events is only possible because of the memoryless property of the decomposition over a basis of exponentials. Otherwise, the respective complexity would have been at least $\Theta(\sum_{h=1}^{H} n_h^2 K\sigma)$, with $\sum_{h=1}^H n_h^2 \op{\gg} n$.


\begin{algorithm}[t] \small
\caption{Construction of $D$ and $B$ tensors}
\begin{algorithmic}\label{alg:construct_B_D}
\STATE {Initialize $j=0$}
\FORALL{$h$}
\STATE{Initialize $(C_{v}^{k}=\one_{\left\{v=d+1\right\}})_{v \geq 0, k \geq 0}$ ; $t_0^h=T_-^h$} ; $(B'_{h,u,k} = 0)_{u \geq 0, k \geq 0}$
\STATE{$B'_{h,d+1,k} \leftarrow \frac{1-\exp(-k \gamma (T_+^h - T_-^h))}{k\gamma}$}
\FORALL{$m \in [1\mydots n_h]$}
\STATE{$dt \leftarrow t_m^h - t_{m-1}^h$}
\FORALL{$k$,$v$ \mbox{s.t} $C_{v}^{k}>0$}
\STATE{$C_{v}^{k} \leftarrow C_{v}^{k} \exp(-\one_{\left\{v>0\right\}}(k+1) \delta dt - \one_{\left\{v=0\right\}}\gamma dt)$}
\ENDFOR
\FORALL{$k$}
\STATE{$D_{h,m,u,v,k} \leftarrow \one_{\left\{u=u_m\right\}} \sum_{v \geq 0}
A_{u_{m}v}C_v^{k}$}
\STATE{$B'_{h,u_{m},k} \leftarrow B'_{h,u_{m},k} + \frac{1-\exp(-k \delta (T_+^h - t_m^h))}{k\delta}$}
\STATE{$C_{u_m}^{k} \leftarrow C_{u_m}^{k} + 1$}
\ENDFOR
\STATE{$j \leftarrow j+1$}
\ENDFOR
\STATE{$B_{h,u,v,k} \leftarrow A_{uv} B'_{h,v,k}$}
\ENDFOR
\RETURN $B$, $D$
\end{algorithmic}
\end{algorithm}
%

\medskip
\inlinetitle{Hawkes parameters optimization}{.}
Updating the Hawkes parameters $\alpha$ requires solving the problem:
\begin{equation}
\begin{array}{c}
\displaystyle\alpha = \argmax_\alpha\ \sum_{h,m} \ln \left({c^{hm}}^\top \alpha\right) - b^\top \alpha \\
\mbox{s.t } \widehat{\mu}_i^K \geq 0 \mbox{ and } \widehat{g}_{ji}^K \geq 0, \ i,j=1,\mydots,r,
\end{array}
\end{equation}
where 
\vspace{-1em}
\begin{itemize}
\item[] \hspace{0.6em} $c^{hm}_{ijk} = \sum_{u,v} P_{ui} \, P_{vj} \, D_{h,m,u,v,k}$, \\
\vspace{-1.5em}
\item[] \hspace{0.7em} $b_{ijk} = \sum_{u,v,h} \, P_{ui} \, P_{vj} \, B_{h,u,v,k}$. 
\end{itemize}
\vspace{-0.5em}
For the sake of inference tractability, we relax the non-negativity constraint and only impose it for the observed time differences: 
\begin{equation}
\sum_{k=1}^{K} \alpha_{ji,k} \, D_{h,m,u,v,k}\ \geq\ 0.
\end{equation}
Then, we approximate the constrained maximization problem by an unconstrained one, using the concept of \emph{self-concordant barriers} \cite{nesterov1994interior}. More specifically, we choose $\epsilon \op{>} 0$ and solve:
\begin{equation}
\label{eq:unconstrainedHawkes}
\alpha = \argmax_\alpha\ \sum_{h,m} \left(\ln \left({c^{hm}}^\top \alpha\right) + \epsilon b(\alpha) \right) - b^\top \alpha,
\end{equation}
where
\begin{equation}
b(\alpha)_{hm} =\! \sum_{i,j,u,v} \ln \left(\sum_{k=1}^{K} \alpha_{ji,k} \, D_{h,m,u,v,k} \right)\!.
\end{equation}
A feature of the optimization problem in \Eq{eq:unconstrainedHawkes} is that it verifies the \emph{self-concordance property}. Self-concordant functions have the advantage of behaving nicely with barrier optimization methods and are among the rare classes of functions for which explicit convergence rates of Newton methods are known \cite{boyd2004convex}. This is the reason why we chose to perform the unconstrained optimization using Newton's method, which requires $O(nKr^2 \op{+} K^3 r^6)$ operations. Note that, since we have $n$ events and aim to learn $K$ Hawkes parameters per pair of groups, we have necessarily $Kr^2 \op{\ll} n$. If we do not have $K^2 r^4 \ll n$, we can reduce the complexity by using quasi-Newton methods that necessitates only $O(nKr^2 \op{+} K^2 r^4) \op{=} O(nKr^2)$ operations. The computation of $c$, $b$ and $b(\alpha)$ requires multiplying sparse matrices of $O(nK\Delta)$ non-zero elements with a full matrix of $r$ columns, which yields a $O(nK\Delta r)$ complexity. Overall, the complexity of the Hawkes parameters optimization is of the order $O(nKr(\Delta \op{+} r))$.

\medskip
\inlinetitle{Projection matrix optimization}{.}
Let $p$ a reshaping of the projection matrix $P$ to a vector (linearized), then $p$ is updated by solving the following maximization procedure:
\begin{equation}
p = \argmax_p\ \sum_{h,m} \ln \left(p^\top \Xi^{hm} p\right) - p^\top \Psi p,
\end{equation}
where 
\vspace{-1em}
\begin{itemize}
\item[] \hspace{-0.7em} $2\,\Xi_{ui,vj}^{hm} = \sum_k (\alpha_{ji,k} D_{h,m,u,v,k} + \alpha_{ij,k} D_{h,m,v,u,k})$, \\
\vspace{-1.5em}
\item[] \hspace{-0.8em} $2\,\Psi_{ui,vj} = \sum_{h,k} (\alpha_{ji,k} B_{h,u,v,k} + \alpha_{ij,k} B_{h,v,u,k})$. \\
\end{itemize}
\vspace{-1.5em}
%
The maximization task is performed by a novel minorize-maximization procedure which is summarized by the following proposition, proved in the Appendix.
\begin{proposition}\label{prop:projopt}
The log-likelihood is non-decreasing under the update:
\begin{equation}\label{eq:projopt}
\begin{split}
p_{ui}^{t+1} = 
& p_{ui}^t \left(\sum_{h,m} \frac{(\Xi^{hm}p^t)_{ui}}{{p^t}^\top \Xi^{hm} p^t(\Psi p^t)_{ui}}\right)^{1/2}. 
\end{split}
\end{equation}
Furthermore, if $p_{ui}$ is a stable fixed point of \Eq{eq:projopt}, then $p_{ui}$ is a local maximum of the log-likelihood.
\end{proposition}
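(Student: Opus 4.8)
The plan is to prove this as a standard minorize-maximization (MM) result: construct a tight lower bound (a minorizer) of the log-likelihood at the current iterate $p^t$, maximize that minorizer in closed form, and check that the closed-form maximizer is exactly the update in \Eq{eq:projopt}. Write the objective as $F(p) = \sum_{h,m} \ln(p^\top \Xi^{hm} p) - p^\top \Psi p$. Both quadratic forms $p^\top \Xi^{hm} p$ and $p^\top \Psi p$ have \emph{nonnegative} entries in $\Xi^{hm}$, $\Psi$ (since $\alpha$, $D$, $B$ are nonnegative, being built from exponentials, indicators and adjacency entries) and $p$ ranges over nonnegative vectors, so every term $p_{ui}\,\Xi^{hm}_{ui,vj}\,p_{vj}$ and $p_{ui}\,\Psi_{ui,vj}\,p_{vj}$ is nonnegative. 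This is the structural fact that makes the two classical MM bounds available.

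First I would handle the concave $\ln$ term. For a fixed pair of indices write $q = p^\top \Xi^{hm} p = \sum_{a,b} p_a \Xi^{hm}_{ab} p_b$ with all summands $\geq 0$; by concavity of $\ln$ and Jensen applied to the convex combination with weights $w_{ab} = p^t_a \Xi^{hm}_{ab} p^t_b / (p^t{}^\top \Xi^{hm} p^t)$, one gets
\begin{equation}
\ln\!\big(p^\top \Xi^{hm} p\big) \ \geq\ \sum_{a,b} w_{ab}\,\ln\!\Big(\tfrac{p_a p_b}{p^t_a p^t_b}\,p^t{}^\top \Xi^{hm} p^t\Big) \ =\ \sum_{a,b} w_{ab}\big(\ln p_a + \ln p_b\big) + \mathrm{const},
\end{equation}
which is tight at $p = p^t$. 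Second I would handle the $-p^\top \Psi p$ term, which is concave in the wrong direction; here the standard trick is the arithmetic-mean bound $p_a p_b \leq \tfrac12\big(\tfrac{p_b^t}{p_a^t}p_a^2 + \tfrac{p_a^t}{p_b^t}p_b^2\big)$, again valid because $\Psi_{ab}\geq 0$, giving $-p^\top\Psi p \geq -\sum_{a,b}\Psi_{ab}\tfrac{p^t_b}{p^t_a}p_a^2 + \mathrm{const}$, tight at $p^t$. Adding the two bounds produces a minorizer $Q(p\mid p^t)$ that is separable in the coordinates: it is a sum over each coordinate $a=(u,i)$ of terms of the form $A_a \ln p_a - B_a p_a^2$ with $A_a = \sum_{h,m} \tfrac{2(\Xi^{hm}p^t)_a\, p_a^t}{p^t{}^\top\Xi^{hm}p^t}$ (the factor from both occurrences of $p_a$ in the quadratic) and $B_a = \tfrac{(\Psi p^t)_a}{p_a^t}$. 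Setting $\partial_{p_a} Q = A_a/p_a - 2B_a p_a = 0$ yields $p_a = \sqrt{A_a/(2B_a)}$, and substituting the expressions for $A_a$, $B_a$ collapses exactly to \Eq{eq:projopt}; because $Q$ minorizes $F$ and is tight at $p^t$, $F(p^{t+1}) \geq Q(p^{t+1}\mid p^t) \geq Q(p^t\mid p^t) = F(p^t)$, giving monotonicity.

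For the second assertion — that a stable fixed point of the map is a local maximum of $F$ — I would argue via the usual EM/MM fixed-point analysis: at a fixed point $p^*$ the gradients of $Q(\cdot\mid p^*)$ and $F$ agree (this is the tangency of a tight minorizer), so $\nabla F(p^*) = 0$ (a stationary point, with appropriate care at the boundary $p_a = 0$, where the $\ln$ barrier in fact keeps iterates strictly positive). Stability of the iteration map $M$ means the spectral radius of $\nabla M(p^*)$ is $<1$; one relates $\nabla M$ to $\nabla^2 Q(\cdot\mid p^*)^{-1}\,\nabla^2 F(p^*)$ (the MM analogue of the EM derivative identity), and since $\nabla^2 Q \prec 0$ (the minorizer is strictly concave by construction — the $\ln p_a$ and $-p_a^2$ terms), the condition $\rho(\nabla M) < 1$ forces $\nabla^2 F(p^*) \preceq 0$, i.e. $p^*$ is a local maximum. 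The main obstacle I anticipate is precisely this last step: making the derivative-of-the-MM-map identity rigorous and correctly translating "stable fixed point" (spectral radius $<1$) into negative semidefiniteness of the Hessian of $F$, including the degenerate case where $\nabla^2 Q$ and $\nabla^2 F$ share null directions; the construction of the minorizer itself and the closed-form maximization are routine once the nonnegativity of all the tensors involved is made explicit.
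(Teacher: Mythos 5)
Your argument for the monotonicity claim is correct and is essentially the paper's own proof, only phrased as maximizing $\mathcal{L}$ via a minorizer instead of minimizing $f=-\mathcal{L}$ via an auxiliary (majorizing) function: you use the same Jensen bound on $\ln(p^\top\Xi^{hm}p)$ with weights $p^t_a\Xi^{hm}_{ab}p^t_b/(p^t{}^\top\Xi^{hm}p^t)$, the same inequality $p_ap_b\le\tfrac12\bigl(\tfrac{p^t_b}{p^t_a}p_a^2+\tfrac{p^t_a}{p^t_b}p_b^2\bigr)$ for the $\Psi$ term, and the same coordinate-wise closed-form maximization of the separable surrogate, which indeed collapses to \Eq{eq:projopt}. (As in the paper, you should state explicitly that the surrogate is only defined for entrywise positive iterates and positive entries of $\Xi^{hm}$, but this is a shared, minor caveat.)

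The genuine gap is in the second assertion. You do not prove it: you sketch the EM/MM rate-matrix route ($\nabla M(p^*)=I-[\nabla^2_{11}Q]^{-1}\nabla^2 F$, then $\rho(\nabla M)<1\Rightarrow\nabla^2F\prec 0$) and explicitly flag the key identity, the boundary behaviour, and the degenerate null-direction case as unresolved. Beyond being incomplete, this route quietly replaces the hypothesis: it interprets ``stable fixed point'' as linearized stability (spectral radius strictly below one), which is stronger than the asymptotic stability actually needed; if the spectral radius equals one the linearization argument gives nothing, and even a completed argument yielding only $\nabla^2F(p^*)\preceq 0$ together with $\nabla F(p^*)=0$ does not imply a local maximum. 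The paper's argument avoids all of this and is two lines: by definition of a stable fixed point there is $\epsilon>0$ such that the iteration started at any $p'$ with $\|p-p'\|_2\le\epsilon$ converges to $p$; by the monotonicity you already proved, $f(p')\ge f(p^1)\ge\cdots$, and by continuity of $f$ the limit of this non-increasing sequence is $f(p)$, hence $f(p')\ge f(p)$ for every $p'$ in the ball, i.e.\ $p$ is a local minimum of $f$ and thus a local maximum of the log-likelihood. No differentiability of the iteration map, no Hessian comparison, and no spectral analysis are required; you should replace your sketched second part with this direct argument.
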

As previously, the computation of $\Xi$, $\Psi$, and all the matrix-vector products requires $O(nK\Delta r^2)$ operations, and each update necessitates $O(nd)$ operations. Again, we consider settings where we have at least a few events per dimension, so the total complexity of the group affinities optimization is $O(nK\Delta r^2)$.

In total, the complexity of the whole optimization procedure is of the order $O(nK \sigma + nK\Delta r^2)$ and its behavior is linear \wrt the number of events and the number of dimensions.

\begin{figure}[t!]
	\centering
	\includegraphics[width=0.6\linewidth, viewport=1 1 233 200, clip]{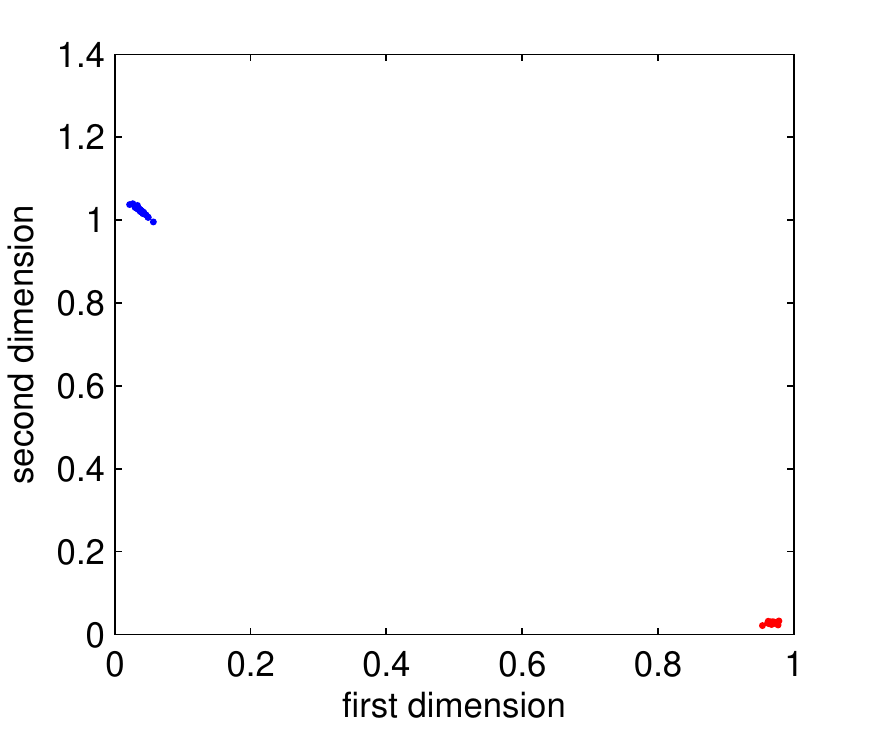}
\caption{Low-dimensional embedding of the event types learned by \methodInitials in the synthetic dataset. The two groups (blue and red) of event types are successfully identified.}
\label{fig:embeddings}
\end{figure}

%


%
%
%
\section{Experiments}\label{sec:exps}
%
For testing the performance of the proposed \methodInitials model and the efficiency of our inference algorithm, the experimental study consists of two parts. First, we simulate MHPs on small random networks and verify that the parameters of the simulation are recovered by our algorithm. Second, we provide results on a prediction task for the MemeTracker dataset in order to show that: i)~\methodInitials is highly competitive compared to state-of-the-art inference algorithms on medium-sized datasets, and ii)~\methodInitials is the first framework able to perform large-scale inference for MHPs.
%
%
%
\subsection{Synthetic data }\label{sec:exps_toy}
%
In this section we illustrate the validity and precision of our method in learning the diffusion parameters of simulated Hawkes processes.
More specifically, we simulate MHPs such that event types are separated into two groups of similar activation pattern. In the context of social networks, these groups may encode \emph{influencer-influencee} types of relations. We show that our inference algorithm can recover the groups and corresponding triggering kernels consistently and with high accuracy. Note that \methodInitials is more generic than this setting, however, we believe that such simple scenario may provide a clearer overview of the capabilities of our approach.

\begin{figure}[t!]
\begin{center}
\includegraphics[width=0.9\linewidth]{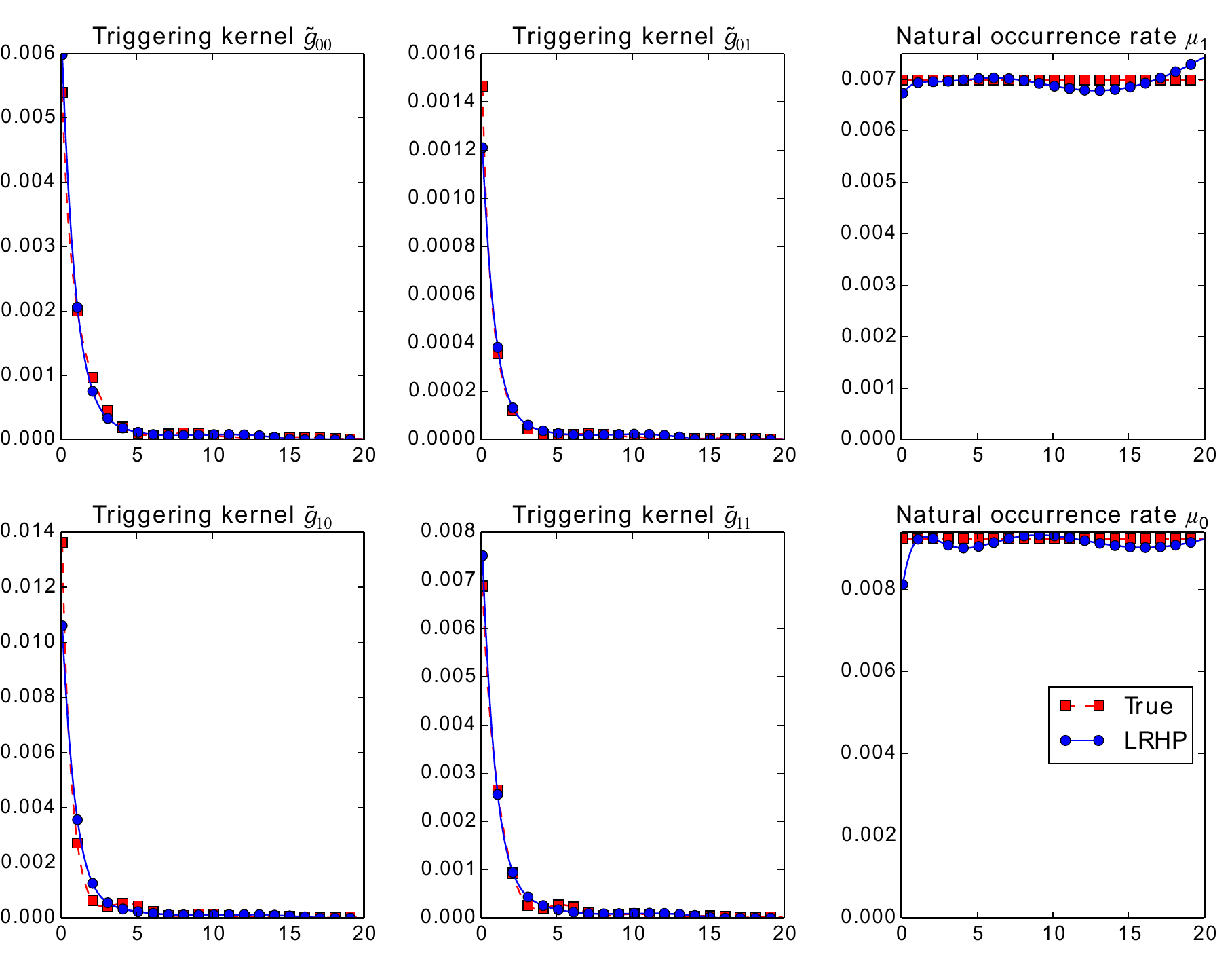}
\caption{True and inferred triggering kernels $\tilde{g}_{ij}$ and natural occurrence rates $\tilde{\mu}_i$, for the synthetic dataset.}
\label{fig:kernels}
\end{center}
\end{figure}

\medskip
\inlinetitle{Data generation procedure}{.}
The employed procedure for generation of synthetic datasets is as follows. 
%
We assume that the MHPs take place on a random \Erdos \cite{erdHos1976evolution} network of $d \op{=} 100$ event types whose adjacency matrix $A$ is generated with parameter $p \op{=} 0.1$ (\ie 10 neighbors in average). 
Then, we consider two distinct groups of event types, and assign each event type to one of the groups at random. 
The natural occurrence rate $\tilde{\mu}_i$ of each group is fixed to a constant value chosen uniformly over $[0,0.01]$. The triggering kernels between two groups, $i$ and $j$, are generated as:
\begin{equation}
\tilde{g}_{ij}(t)=\nu_{ij}\frac{\sin \left(\frac{2\pi t}{\omega_{ij}}+\frac{\pi}{2}((i+j) \bmod 2)\right)+2}{3(t+1)^2},
\label{eqn:TrigKerData}
\end{equation}
where $\omega_{ij}$ and $\nu_{ij}$ are sampled uniformly over respectively $[1,10]$ and $[0,1/50]$, respectively. These parameter intervals are chosen so that the behavior of the generated process is \emph{non-explosive} \cite{daley2007introduction}.
The rationale behind the kernels in \Eq{eqn:TrigKerData} is that they present a power-law decreasing intensity that allows long term influence with a periodic behavior. This kind of dynamics could, for instance, represent the daytime cycles of internet users.

\medskip
\inlinetitle{Results}{.}
Following the above procedure we generate 8 datasets by sampling 8 different sets of parameters \{$(\omega_{ij},\nu_{ij})_{i\leq r, j \op{\leq} r}$, $(\tilde{\mu}_i)_{i\leq r}$\}. Finally, we simulate $10^5$ \iid realizations of the resulting Hawkes process, that we use as training set. The ability of \methodInitials to recover the true group triggering kernels $\tilde{g}_{ij}$, is shown in \Fig{fig:kernels} and evaluated by means of the \emph{normalized $L^2$ error}:
\begin{equation}
\frac{1}{r^2}\sum_{i,j} \frac{||\widehat{g}_{ij} - \tilde{g}_{ij}||_2}{||\widehat{g}_{ij}||_2 + ||\tilde{g}_{ij}||_2}.
\end{equation}
In average, this is only $4.2\%$, with minimum and maximum value amongst the 8 datasets of $3.8\%$ and $4.7\%$, respectively.

In order to find the group assignments, we infer the parameters of an \methodInitials of rank $r \op{=} 2$, and recover the group structure by a clustering algorithm on the projected event types.
Then, choosing as basis of the two-dimensional space the centers of the two clusters enables the recovery of the group triggering kernels.
\Fig{fig:embeddings} shows the two-dimensional embedding learned by our inference algorithm for one of the 8 sample datasets. Two particularly separate clusters appear, which indicates that the group assignments were perfectly recovered. The other 7 datasets gave similar results.
Moreover \Fig{fig:kernels} compares visually the fitness of the inferred to the true natural occurrence rates and triggering kernel functions. 

These results provide strong indication regarding the validity of our algorithm for inferring the underlying dynamics of MHPs.
\begin{table*}[t] \footnotesize
\caption{Experiments on the MemeTracker datasets. \emph{AUC} (\%) and \emph{Accuracy} (\%) for predicting the \emph{next event to happen}, using \methodInitials, MEMIP, and NAIVE approach. In each case, the CPU time (secs) needed for training is also reported. The experiments for the missing measurements, denoted with `$*$', did not finish in reasonable time.}
\vspace{2mm}
\hspace{-4em}
\begin{tabular}{lrrr | r r | r r r | r r r}
\toprule
\multicolumn{4}{c|}{\textbf{\emph{Dataset}}} & \multicolumn{2}{c|}{\textbf{\emph{Training Time}}} & \multicolumn{3}{c|}{\textbf{\emph{AUC}}} & \multicolumn{3}{c}{\textbf{\emph{Accuracy}}}\\
Name & \emph{thd} & $n$ & $d$ & \methodInitials & MEMIP & \methodInitials & MEMIP & NAIVE & \methodInitials & MEMIP & NAIVE \\
\midrule
MT$_1$  & 50000 & 7311   & 13   & $8.34$ & $3.16$ & $86.4$ & $85.8$ & $86.1$ & $98.8$ & $99.2$ & $93.1$ \\
MT$_2$  & 10000 & 74474  & 80   & $281$  & $7.14 \op{\cdot} 10^3$ & $90.1$ & $91.7$ & $84.4$ & $91.7$ & $93.7$ & $70.6$ \\ 
MT$_3$  & 5000  & 277914 & 172  & $1.95 \op{\cdot} 10^3$ & $1.74 \op{\cdot} 10^5$ & $84.3$ & $85.9$ & $81.2$ & $88.6$ & $91.1$ & $67.7$ \\
MT$_4$  & 1000  & 875402 & 1075 & $3.77 \op{\cdot} 10^5$ & * & $92.6$ & * & $88.2$ & $94.8$ & * & $87.5$ \\
\bottomrule
\end{tabular}
\label{table:PredSimul}
\end{table*}

\begin{figure}[t!]
\begin{center}
\includegraphics[width=0.6\linewidth, viewport=1 1 306 262, clip]{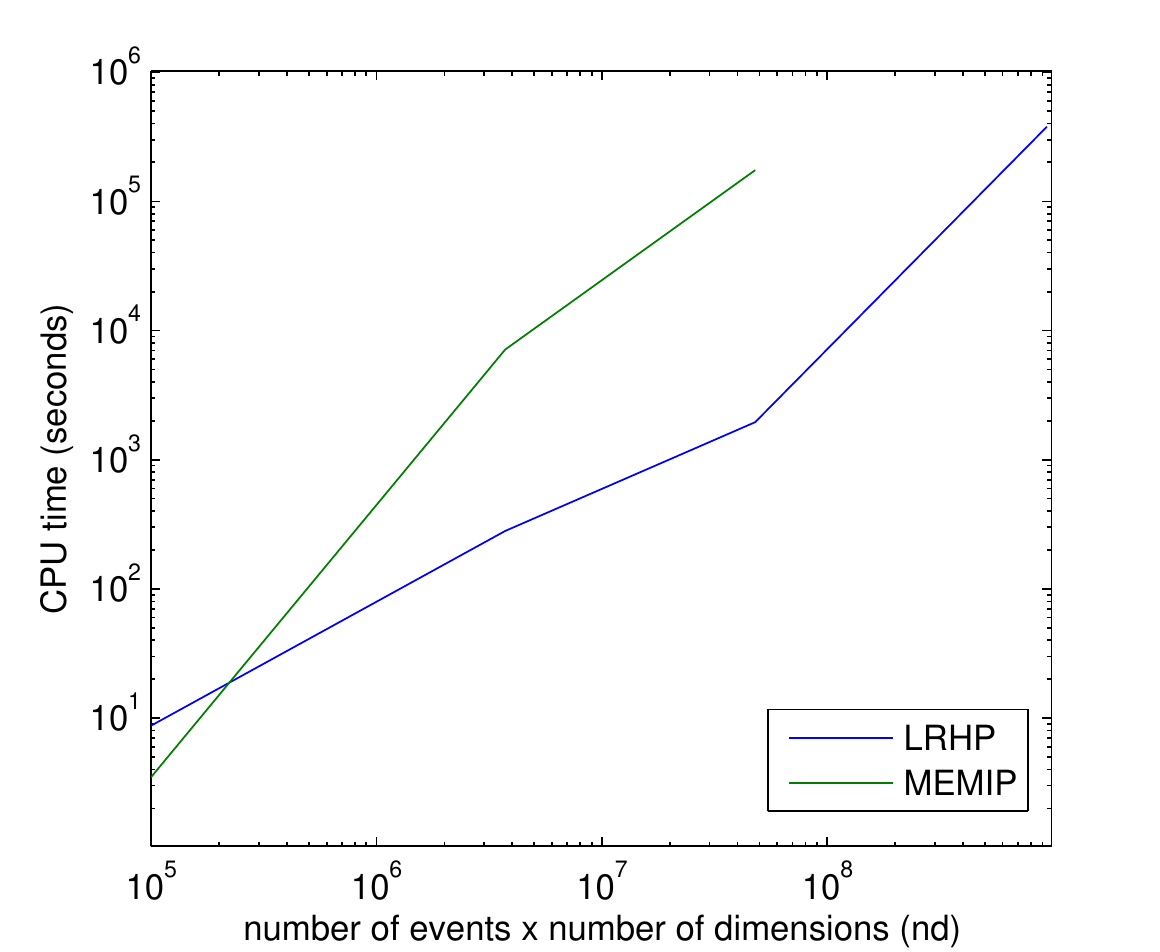}
\caption{Training time (secs) for \methodInitials and MEMIP algorithm against the quantity $nd$. The linear behavior for \methodInitials and super-linear for MEMIP are clearly visible.}
\label{fig:cputime}
\end{center}
\end{figure}

%
%
%
%
\subsection{Results on the MemeTracker dataset}\label{sec:exps_memtracker}
%
Our final set of experiments are conducted on the MemeTracker \cite{snapnets} dataset. MemeTracker is a corpus of $96 \op{\cdot} 10^5$ blog posts published between August 2008 and April 2009. We use posts from the period August 2008 to December 2008 as training set, and evaluate our models on the four remaining months. An \emph{event for website $u$} is defined as the creation of a hyperlink on website $u$ towards any other website. 
We also consider that an edge exists between two websites if at least one hyperlink exists between them in the training set.
In order to compare the inference algorithms on datasets of different size, prediction was performed on four subsets of the MemeTracker dataset: MT$_1$, MT$_2$, MT$_3$ and MT$_4$. These subsets are created by removing the events taking place on websites that appear less than a fixed number of times in the training set. This threshold value ($thd$ in \Tab{table:PredSimul}) is, respectively, $50000$, $10000$, $5000$ and $1000$.

\medskip
\inlinetitle{Prediction task}{.}
The task consists in predicting the \emph{next website to create a post}. More specifically, for each event of the test dataset, we are interested in predicting the website on which it will take place knowing its time of occurrence. 
For MEMIP and \methodInitials, prediction will be achieved by scoring the websites according to $\lambda_u(t_m)$, since this value is proportional to the theoretical conditional probability for event $m$ to be of type $u$.
We evaluate the prediction with two metrics: the area under the ROC curve (AUC) and a classification accuracy with a fixed number of candidate types. Due to the high bias towards major news websites (\eg CNN), the number of candidate types has to be relatively large to see differences in the performance of algorithms, and we set this value to $30\%$ of the total number of event types $d$ in our experiments.

\medskip
\inlinetitle{Baselines}{.}
In the following experiments, we use as main competitor the state-of-the-art MEMIP algorithm \cite{lemonnier2014nonparametric}, which is, to the best of our knowledge, the only inference algorithm with linear complexity in the number of events $n$ in the training history. Also, previous work \cite{lemonnier2014nonparametric} shows that this algorithm outperforms the more standard inference algorithm MMEL \cite{conf/icml/ZhouZS13} on the MemeTracker dataset.
In addition, we also use the NAIVE baseline which ranks the nodes according to their frequency of appearance in the training set. Note that this is equivalent to fitting a Poisson process and, hence, does not consider mutual-excitation.
%

%

\medskip
\inlinetitle{Results}{.}
\Tab{table:PredSimul} summarizes the experimental results comparing the proposed \methodInitials against MEMIP and NAIVE algorithms on four subsets of the MemeTracker dataset. In each row, the table describes the dataset characteristics, and for each method it provides the training time, AUC, and accuracy with the best parameter settings (for \methodInitials, $K\op{=}6$ and $r\op{=}2$, except for MT$_3$ for which $r\op{=}3$). On small to medium-sized datasets (MT$_1$, MT$_2$ and MT$_3$), \methodInitials is as efficient as its main competitor MEMIP, while orders of magnitude faster. On the large dataset MT$_4$, \methodInitials still runs in reasonable time while substantially outperforming the NAIVE baseline. Note that MEMIP could not be computed in reasonable time for this dataset (less than a few days). 

\Fig{fig:cputime} shows the computational time needed for the inference algorithm on all the MemeTracker datasets, with respect to $nd$. This time is indeed linear in $nd$ for \methodInitials, while super-linear for the state-of-the-art competitor of the related literature. In \Fig{fig:MMHPSensiTracker} it is indicated that the accuracy measurements are relatively stable \wrt the rank of the approximation $r$, with a maximum for $r \op{=} 3$. Finally, \Fig{fig:embeddingsMemeTracker} shows the two-dimensional embedding learned by \methodInitials for the MT$_3$ dataset. In the embedding space, the websites seem to align along the axes of the embedding space, with varying amplitudes. This may indicate that two basic groups of similar activities were recovered by the algorithm, although with a large variability in the activity of the websites.

\begin{figure}[t!]
\begin{center}
\includegraphics[width=0.6\linewidth, viewport=3 0 231 181, clip]{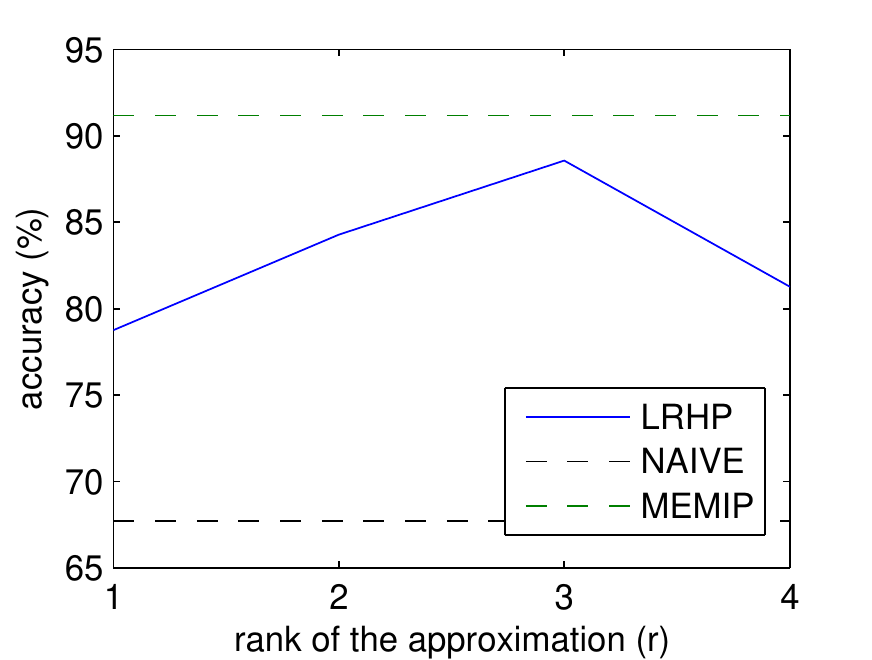}
\vspace{-2mm}
\caption{Sensitivity analysis of the accuracy of \methodInitials \wrt the rank $r$ of the approximation used for inference, and a comparison to the best scores for MMEL and Naive baselines on the MT$_3$ dataset.}
\label{fig:MMHPSensiTracker}
\end{center}
\end{figure}

\begin{figure}[t!]
	\centering
	\includegraphics[width=0.6\linewidth, viewport=1 1 233 200, clip]{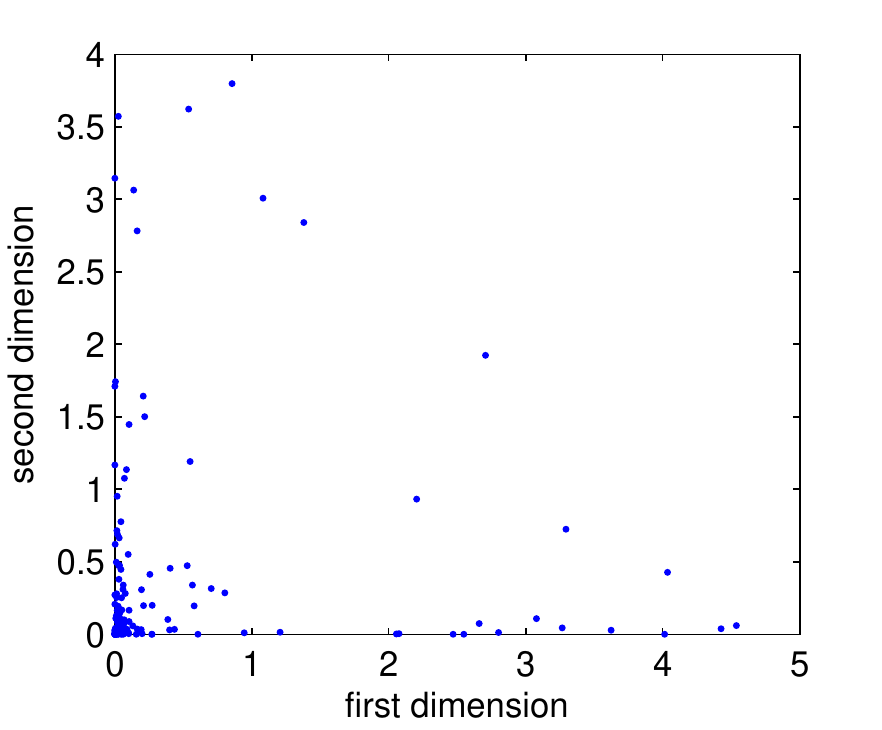}
	\vspace{-2mm}
\caption{Low-dimensional embeddings of the event types learned by \methodInitials for the MT$_3$ dataset.
}
\label{fig:embeddingsMemeTracker}
\end{figure}
%
%
%
\section{Conclusion}\label{sec:conclusion}
%
This work focused on modeling multivariate time series where both a very large number of event types can occur, and a very large number of historical observations are available for training. We introduce a model based on multivariate Hawkes processes that we call \emph{\methodName} (\methodInitials), and develop an inference algorithm for parameter estimation. Theoretical complexity analysis as well as experimental results show that our approach is highly scalable, while performing as efficiently as state-of-the-art inference algorithms in terms of prediction accuracy.

\subsection*{Appendix: Proof of \Proposition{prop:projopt}.}

For this proof we will make use of the concept of \emph{auxiliary functions}.
\begin {definition}
Let $g\!\!: \mathcal{X}^2 \op{\rightarrow} \mathcal{R}$ is an auxiliary function for  $f\!\!: \mathcal{X} \op{\rightarrow} \mathcal{R}$ iff $\forall (x,y) \op{\in} \mathcal{X}^2,\\
g(x,y) \op{\geq} f(x)$ and $\forall x \op{\in} \mathcal{X}, g(x,x) \op{=} f(x)$.
\end{definition}
The reason why these functions are an important tool for deriving iterative optimization algorithms is given by the following lemma.
\begin {lemma}\label{lem:auxfundef}
If $g$ is an auxiliary function for $f$, then 
\begin{equation}
f\left(\arg\!\min_{x} g(x,y) \right) \leq f(y). 
\end{equation}
\end{lemma}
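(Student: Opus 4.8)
The plan is to obtain the claimed inequality by a short three‑term chain that simply combines the two defining properties of an auxiliary function with the optimality of the minimizer. First I would fix $y \in \mathcal{X}$ and set $x^\star \in \arg\!\min_{x} g(x,y)$, assuming such a minimizer exists (if one only has an infimum, replace $x^\star$ by a point whose value is within $\eta$ of the infimum and let $\eta \to 0$ at the end; in the applications of this lemma the minimizer is given in closed form, so this subtlety does not arise).

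The argument then runs as follows. By the first property in the definition of an auxiliary function, evaluated at $x^\star$, we have $f(x^\star) \le g(x^\star, y)$. Since $x^\star$ minimizes the map $x \mapsto g(x,y)$ over $\mathcal{X}$ and $y \in \mathcal{X}$ is itself a feasible point, $g(x^\star, y) \le g(y,y)$. Finally the second property gives $g(y,y) = f(y)$. Concatenating these three relations,
\begin{equation}
f\!\left(\arg\!\min_{x} g(x,y)\right) = f(x^\star) \ \le\ g(x^\star,y) \ \le\ g(y,y) \ =\ f(y),
\end{equation}
which is exactly the statement of the lemma. (As an immediate corollary, iterating the update $y \mapsto \arg\!\min_x g(x,y)$ produces a sequence along which $f$ is non‑increasing, which is the mechanism by which Proposition~\ref{prop:projopt} will be proved.)

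I do not expect any genuine obstacle in proving this lemma — it is essentially a reformulation of the definition, and the only things worth a sentence are the existence of the $\arg\!\min$ and the trivial observation that $y$ is admissible in the minimization. The real work of the appendix lies elsewhere: namely, exhibiting a concrete auxiliary function $g(p,p^t)$ for (minus) the log‑likelihood $\widehat{\mathcal{L}}$ such that its minimizer in $p$ is precisely the multiplicative update \eqref{eq:projopt}; that construction — typically via Jensen's inequality on the $\ln(p^\top \Xi^{hm} p)$ terms and a suitable convexity/AM–GM bound on the $p^\top \Psi p$ term — is where the technical effort will be concentrated, after which the present lemma is invoked to conclude monotonicity, and a standard fixed‑point/stationarity argument handles the "stable fixed point $\Rightarrow$ local maximum" claim.
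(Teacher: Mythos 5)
Your proof is correct and follows essentially the same three-term chain as the paper: $f(x^\star)\le g(x^\star,y)\le g(y,y)=f(y)$, using the two defining properties of an auxiliary function together with the optimality of the minimizer. The extra remark on the existence of the $\arg\!\min$ is a harmless refinement not present in the paper's proof.
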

\begin{proof}
Let $z \op{=} \arg\!\min_{x} g(x,y)$. Then $$ f(z) = g(z,z) \leq g(z,y) \leq g(y,y) = f(y).$$
where the first inequality comes from the definition of $g$ and the second from the definition of $z$.
\end{proof}
Therefore, if an auxiliary function $g$ is available, constructing the sequence $y_{t+1} = \arg\!\min_{x} g(x,y_{t})$ that verifies $f(y_{t+1}) \op{\leq} f(y_{t})$ for all $t$ constitutes a candidate method for finding the minimum of $f$. In our case, we are able to make use of the following result.
\begin {lemma}\label{lem:auxfun}
Let $f(p) \op{=} - \sum_{k=1}^K \ln \left(p^\top \Xi^k p\right) + p^\top \Psi p$ 
where $p\in\mathbb{R}_+^K$, $\Xi^1$, ..., $\Xi^K$
are positive symmetric matrices and $\Psi$ is a symmetric matrix, then
\begin{equation}
\begin{split}
g(p,q) = & - \sum_{k=1}^K \left(\frac{2q^\top \Xi^k [q \ln(p/q)]}{q^\top \Xi^k q} + \ln \left(q^\top \Xi^k q\right)\right) \\ 
& + q^\top \Psi [p^2/q]
\end{split}
\end{equation}
is an auxiliary function for $f$.
\end{lemma}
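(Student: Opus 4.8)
The plan is to split $f$ into its log-barrier part $f_1(p) = -\sum_{k=1}^K \ln\!\big(p^\top\Xi^k p\big)$ and its quadratic part $f_2(p) = p^\top\Psi p$, to exhibit an auxiliary function for each piece separately, and then to invoke the elementary fact that a sum of auxiliary functions is an auxiliary function for the sum. Reading off the two pieces of $g$, the natural candidates are
\[
g_1(p,q) = - \sum_{k=1}^K \left(\frac{2\,q^\top \Xi^k [q \ln(p/q)]}{q^\top \Xi^k q} + \ln\!\big(q^\top \Xi^k q\big)\right), \qquad g_2(p,q) = q^\top \Psi\,[p^2/q],
\]
so that $g = g_1 + g_2$. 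Throughout I would keep in mind that $p$ and $q$ have strictly positive entries and that the matrices $\Xi^k$ and $\Psi$ have non-negative entries (this is what ``positive'' should be read to mean here, and it holds in the application because the tensors $D$ and $B$ are non-negative); both facts are used below.

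The equality $g(p,p)=f(p)$ is immediate: at $p=q$ the vector $q\ln(p/q)$ vanishes, which kills the first summand of $g_1$, and $[p^2/q]$ reduces to $q$, so $g_1(q,q) = -\sum_k \ln(q^\top\Xi^k q) = f_1(q)$ and $g_2(q,q) = q^\top\Psi q = f_2(q)$. For the inequality $g_1(p,q)\ge f_1(p)$ I would argue term by term in $k$. Writing $M=\Xi^k$, the numbers $w_{ab}=M_{ab}q_aq_b/(q^\top M q)$ form a probability distribution — here non-negativity of the entries of $M$ and positivity of $q$ enter — and satisfy $p^\top M p/(q^\top M q) = \sum_{a,b} w_{ab}\,p_ap_b/(q_aq_b)$. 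Applying Jensen's inequality to the concave logarithm, then splitting $\ln(p_ap_b/q_aq_b)=\ln(p_a/q_a)+\ln(p_b/q_b)$ and using $w_{ab}=w_{ba}$ to fold the two halves together, should collapse the lower bound exactly to $2\,q^\top M[q\ln(p/q)]/(q^\top M q)$; rearranging and summing over $k$ yields $g_1(p,q)\ge f_1(p)$.

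For $g_2(p,q)\ge f_2(p)$ I would use the symmetrization identity
\[
q^\top \Psi\,[p^2/q] - p^\top \Psi p = \frac{1}{2}\sum_{a,b} \Psi_{ab}\,\frac{(q_a p_b - q_b p_a)^2}{q_a q_b},
\]
whose right-hand side is non-negative precisely because every entry of $\Psi$ is non-negative (equivalently, $2p_ap_b \le q_b p_a^2/q_a + q_a p_b^2/q_b$ by AM--GM, applied entrywise). Adding the two inequalities gives $g\ge f$, finishing the verification that $g$ is an auxiliary function for $f$. I expect the only step that needs genuine care to be the symmetrization inside the $g_1$ bound: one has to track the indices so that the symmetric convex-combination average really does collapse to the superficially asymmetric quantity $q^\top\Xi^k[q\ln(p/q)]$, and one should be explicit about where non-negativity of the entries of $\Xi^k$ and $\Psi$, and strict positivity of $p$ and $q$, are invoked — everything else is routine algebra.
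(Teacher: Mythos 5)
Your proof is correct and follows essentially the same route as the paper's: Jensen's inequality applied to the concave logarithm with the weights $q_a \Xi^k_{ab} q_b / (q^\top \Xi^k q)$ for the logarithmic terms, and the entrywise AM--GM bound $2p_ap_b \le q_b p_a^2/q_a + q_a p_b^2/q_b$ together with symmetry for the quadratic term. The only difference is presentational: you split $g$ into two auxiliary functions and you make explicit the entrywise non-negativity of $\Psi$ (and strict positivity of $p,q$) that the paper's argument uses only implicitly, which is a reasonable clarification rather than a new approach.
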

In the lemma above, the vectors $[q \ln(p/q)]$ and $[p^2/q]$ are to be understood as coordinate-wise operations, \ie $(q_{i}\ln(p_{i}/q_{i}))_{i} $ and $(p_{i}^{2}/q_{i})_{i}.$
\begin{proof}
It is clear that $g(p,p) \op{=} f(p)$ so the proof reduces to showing that $g(p,q) \op{\geq} f(p)$. Let $k\leq K$. By concavity of the logarithm function, we have for every weight matrix $(\alpha_{ij})_{ij}$ such that $\sum_{i,j} \alpha_{ij} \op{=} 1 $, 
$$\ln \left(p^\top \Xi^{k} p\right) \geq \sum_{i,j} \alpha_{ij} \ln \left(\frac{p_{i} \Xi^{k}_{ij} p_{j}}{\alpha_{ij}} \right).$$ Note that the right-hand side term of the equation is well-defined because of the positivity constraint imposed on each $\Xi^{k}_{ij}$. By choosing 
$ \alpha_{ij} = q_{i} \Xi^{k}_{ij} q_{j}/q^\top \Xi^{k} q$, and using the symmetry of $\Xi^{k}$, we get:
$$ \ln \left(p^\top \Xi^{k} p\right) \geq \frac{2q^\top \Xi^{k} [q \ln(p/q)]}{q^\top \Xi^{k} q} + \ln \left(q^\top \Xi^{k} q\right).$$ For the right-hand side of the above equation, we use the fact that for every $i,j$ it holds $$p_{i}p_{j} \leq \frac{p_{i}^2 q_{j}}{2q_{i}} + \frac{p_{j}^2 q_{i}}{2q_{j}},$$ and the symmetry of $\Psi$, in order to conclude that $p^\top \Psi p \op{\leq} q^\top \Psi [p^2/q].$
\end{proof}
Using \Lemma{lem:auxfun}, we are now in position to prove \Proposition{prop:projopt} by showing that the proposed update $p^{t+1}$ is indeed the global minimum of $g(p,p^{t})$. $g$ being the sum of univariate convex functions of the $p_{i}$, it is sufficient to show that for every $i$, the partial derivative of $g(p,p^{t})$ with respect to $p_{i}$ vanishes in $p^{t+1}_{i}$. We therefore need:
$$ - \sum_{k}  \frac{p^t_{i} (\Xi^{k}p^t)_{i}}{p^{t+1}_{i}{p^t}^\top \Xi^{k} p^t} + \frac{p^{t+1}_{i}(\Psi p^t)_{i}}{p^t_{i}}=0,$$
which only positive solution is given by:
\begin{equation}\label{eq:lasteq}
p_{i}^{t+1} = 
p_{i}^t \left(\sum_{k} \frac{(\Xi^{k}p^t)_{i}}{{p^t}^\top \Xi^{k} p^t(\Psi p^t)_{i}}\right)^{1/2}.
\end{equation}
Finally, if $p$ is a stable fixed point of \Eq{eq:lasteq}, then, by definition, there exists $\epsilon \op{>} 0$ such that, $\forall p'$ for which $||p \op{-} p'||_2 \op{\leq} \epsilon$, the iterative algorithm starting at $p^0 \op{=} p'$ converges to $p$. However, since $f$ is continuous, a simple iteration of the inequality of \Lemma{lem:auxfundef} implies that $f(p') \op{\geq} f(p^1) \op{\geq} \mydots \op{\geq} \lim_{t\rightarrow +\infty} f(p^t) \op{=} f(p)$, and $p$ is a local minimum of $f$.

\bibliographystyle{unsrt}
\bibliography{LargeScaleHawkes}


\end{document}